	\let\llncs@addcontentsline\addcontentsline
	\patchcmd{\maketitle}{\addcontentsline}{\llncs@addcontentsline}{}{}
\renewcommand{\myMarginPar}[2][]{}
\newcommand{\ifarxiv}[2]{#2} 
\newcommand{\savespace}[2][]{#1}
	\renewcommand{\beqarrys}{$$\begin{array}{llll}}
	\renewcommand{\eeqarrys}{\end{array}$$}
\newcommand{\tor}{\textrm{tort}}
\newcommand{\ltor}{m}
\newcommand{\utor}{M}
\newcommand{\Wspace}{\calW}
\newcommand{\Xspace}{\calX}
\newcommand{\Ws}{\Wspace}
\newcommand{\Xs}{\Xspace}
\newcommand{\disW}{\ensuremath{d_{\Ws}}}
\newcommand{\disX}{\ensuremath{d_{\Xs}}}
\newcommand{\la}{\langle}
\newcommand{\ra}{\rangle}
\newcommand{\Riman}{\textrm{Riemannian}}
\newcommand{\Frob}{\textrm{Frobenius}}
\newcommand{\Lip}{\textrm{Lipschitz}}
\newcommand{\ddt}{\textrm{dt}}
\newcommand{\dx}{\textrm{dx}}
\newcommand{\dy}{\textrm{dy}}
\newcommand{\dz}{\textrm{dz}}
\newcommand{\txt}{\textrm}
\newcommand{\refThem}{\txt{Theorem }\ref}
\newcommand{\refEq}{\txt{Equation }\ref}
\newcommand{\cs}{\cspace}
\newcommand{\lto}[1][]{\stackrel{#1}{\leadsto}}
\renewcommand{\ifarxiv}[2]{#1}
	\newcommand{\cspace}{\ensuremath{\mathcal{C}{space}}}
	\newcommand{\sqs}[1][2]{{\wh{S^{#1}}}} 
	\newcommand{\whso}[1][3]{{\wh{SO}_{#1}}} 
	\newcommand{\WH}{^{\wh{\;}}}
	\newcommand{\ccc}{C}
\begin{document}
\mainmatter              
\title{Distortion Bounds of Subdivision Models for $SO(3)$}
%

%
\titlerunning{Subdivision Atlas and Distortion for $SO(3)$}
%

%
\author{Zhaoqi Zhang\inst{1}
	\and Chee Yap\inst{2}}
\authorrunning{Zhang \& Yap} 

\author{Chee Yap\thanks{%
	This work is supported by NSF Grant No.~CCF-2008768.
	Zhaoqi's PhD work was supported under this grant.}
\and
Zhaoqi Zhang
}
\authorrunning{Yap and Zhang}
%
%

\institute{Courant Institute, New York University\\
New York, NY 10012, USA\\
\email{yap@cs.nyu.edu \\ Homepage: 
	\texttt{https://cs.nyu.edu/yap}}
\\[2mm] \and
Beijing Key Laboratory of Topological Statistics\\
	\& Applications for Complex Systems\\
Beijing Institute of Mathematical Sciences and Applications\\
Beijing 101408, China\\
\email{zhangzhaoqi@bimsa.cn \\ Homepage:
	\texttt{https://www.bimsa.cn/detail/ZhaoqiZhang.html} }
}

\maketitle              
%
\begin{abstract}
In the subdivision approach to robot path planning,
we need to subdivide the configuration space of a robot
into nice cells to perform various computations.
For a rigid spatial robot, this configuration space
is $SE(3)=\mathbb{R}^3\times SO(3)$.  
The subdivision of $\mathbb{R}^3$ is standard but so far, there are no
global subdivision schemes for $SO(3)$.  We recently
introduced a representation for $SO(3)$
suitable for subdivision. This paper investigates
the distortion of the natural metric on $SO(3)$ caused
by our representation.  The proper framework for
this study lies in the Riemannian geometry
of $SO(3)$, enabling us to obtain exact distortion
bounds.

\keywords{
	subdivision approach,
	subdivision atlas,
	robot path planning,
	SO(3),
	Riemannian metric distortion,
	distortion constant.}
	\end{abstract}
	%
\sect{Introduction}
	Path planning is a fundamental task in robotics
\ifarxiv{%
	\cite{lavalle:planning:bk,choset-etal:bk}.
	}{%
	\cite{lavalle:planning:bk}.
	}
	The problem may be formulated thus:
	Fix a robot $R_0$ in $\RR^k$ ($k=2,3$).
	For example, a rigid robot $R_0$ can be identified
	as a subset of $\RR^k$ (typically a disc or convex polygon).
	Given $(\alpha,\beta,\Omega)$ where
	$\alpha,\beta$ are the start and goal configurations of $R_0$,
	the task is to
	either find a path from $\alpha$ to $\beta$ avoiding the
	obstacle set $\Omega\ib\RR^k$, or output NO-PATH.
	Such an algorithm is called a \dt{planner} for $R_0$.
	This problem originated in AI as the FINDPATH problem.
	In the 1980s, path planning began to be studied algorithmically,
\ifarxiv{%
	from an intrinsic geometric perspective
	\cite{brooks-perez:subdivision:83}.
	}{%
	from an intrinsic geometric perspective.
	}
\ifarxiv{%
	Schwartz and Sharir \cite{ss2} showed that algebraic
	}{%
	Schwartz and Sharir showed that algebraic
	}
	path planning can be solved exactly by a reduction
	to cylindrical algebraic decomposition.
\ifarxiv{%
	Yap \cite{yap:amp:87} described 
	}{%
	Yap (1987) described
	}
	two ``universal methods'' for constructing such paths:
\ifarxiv{%
	cell-decomposition \cite{ss1,ss2}
	and retraction 
		\cite{odun-yap:disc:85,odun-sharir-yap:retraction:83}.
	}{%
	cell-decomposition and retraction.
	}
	These exact methods are largely of theoretical interest
	because they require exact computation with algebraic numbers.
	Since no physical robot is exact and maps of the world
	even less so, we need numerical approximations.  But
	we lack a systematic way to
	``approximately implement exact algorithms'' (this difficulty
	is not specific to path planning).
	After the 1990s the exact approach is largely eclipsed by the
	sampling approach
\ifarxiv{%
	(combined with randomness):
	thus we have PRM \cite{prm}, RRT \cite{rrt}, etc.
	}{%
	(combined with randomness) such as
	PRM, RRT and many variants etc.
	}
	These proved to be practical, widely
\ifarxiv{%
	applicable and easy to implement \cite{choset-etal:bk}.
	}{%
	applicable and easy to implement \cite{lavalle:planning:bk}.
	}
	But it has a well-known bane called the ``narrow passage'' problem
\ifarxiv{%
	\cite{hsu-latombe-kurniawati:foundations:06}.
}{}
	This bottleneck (sic) is actually symptomatic
	of a deeper problem which we pointed out in
	\cite{sss1}: the sampling algorithms do not know how to halting
	when there is no path.  Recently,
	Dayan et al.~\cite{dayan-solovey-pavone-halperin:near-opt:21}
	provided a halting sampling algorithm for an Euclidean configuration
	space (for a collection of discs).  This requires an extra
	resolution parameter analogous to our $\veps>0$.
	Similar results could in principle, be achieved
	for sampling\footnote{
		Note that our subdivision framework could easily
		be adapted to give such halting sampling algorithms.
	} non-Euclidean configuration spaces.
	For a full discussion of this issue, see
	\cite[Appendix A]{zhang-chiang-yap:se3-arxiv:24}.

\ifarxiv{%
	In \cite{sss1,sss,sss2}, we revisited the
	}{%
	In \cite{sss1,sss2}, we revisited the
	}
	subdivision approach by introducing
	the ``Soft Subdivision Search'' (SSS) framework 
	to address two foundational issues:
	(1) To avoid the underlying cause of the above
	halting problem, we introduce the notion
	of \dt{resolution-exactness}. 
	(2) To exploit resolution-exactness, we need
	the notion of \dt{soft-predicates}.
	\ignore{
	\bitem
	\item {\em How can we avoid the ``Zero Problem''
	in standard formulations of path planning?}
	Ultimately, any planner must decide if a certain quantity is zero 
	because standard formulations contain a 
	``hard'' decision problem: is there a path or no? 
	We avoid this zero problem by introducing
	the criteria of \dt{resolution-exactness} into path planning.
	\item {\em What is a principled way to use approximations
	in path planning?}
	The SSS framework allows such approximations when we
	combine it with the use of \dt{soft predicates}.
	The latter involves numerical approximations, but
	are able to achieve certified results.
	\eitem
	}
	In a series of papers with implementations
\ifarxiv{%
		\cite{sss1,%
		yap-luo-hsu:thicklink:16,luo-chiang-lien-yap:link:14,%
		zhou-chiang-yap:complex-robot:20,%
		rod-ring,%
		zhang-chiang-yap:se3-robot:24}, 
}{
	\cite{sss1,zhang-chiang-yap:se3-robot:24}, 
}
	we showed that SSS framework is practical.
	The guarantee of resolution-exactness
	is much stronger than any guarantees of sampling algorithms.
	Despite such strong guarantees,
	SSS planners outperform or match
	the state-of-art sampling algorithms
	for various robots with up to 6 degrees of freedom (DOF).
	Our last paper 
		\cite{zhang-chiang-yap:se3-robot:24} 
	reached a well-known milestone, achieving the first
	rigorous, complete and implementable path planner
	for a rigid spatial robot with 6 DOF.


	Central to the design and implementation of the 6-DOF planner
	\cite{zhang-chiang-yap:se3-robot:24} 
	is a representation of $SO(3)$ that supports subdivision.
	As a 3-dimensional space, $SO(3)$ can be 
	{\em locally} represented by
	three real parameters.  E.g., using Euler angles
	$(\alpha,\beta,\gamma)$ which range over the box
	$B_0 = [-\pi,\pi]\times[-\pi/2,\pi/2]\times[-\pi,\pi]\ib\RR^3$.
	\savespace{E.g., see \cite{lee-choset:sensor-rod-planning:05}.}
	Such parametrizations have well-known singularities
	($\beta=0$)
	and a wrong global topology.
	For example for there is 
	    	\begin{figure}[htb]
	    	  \begin{center}
		   \scalebox{0.18}{
	    	     \input{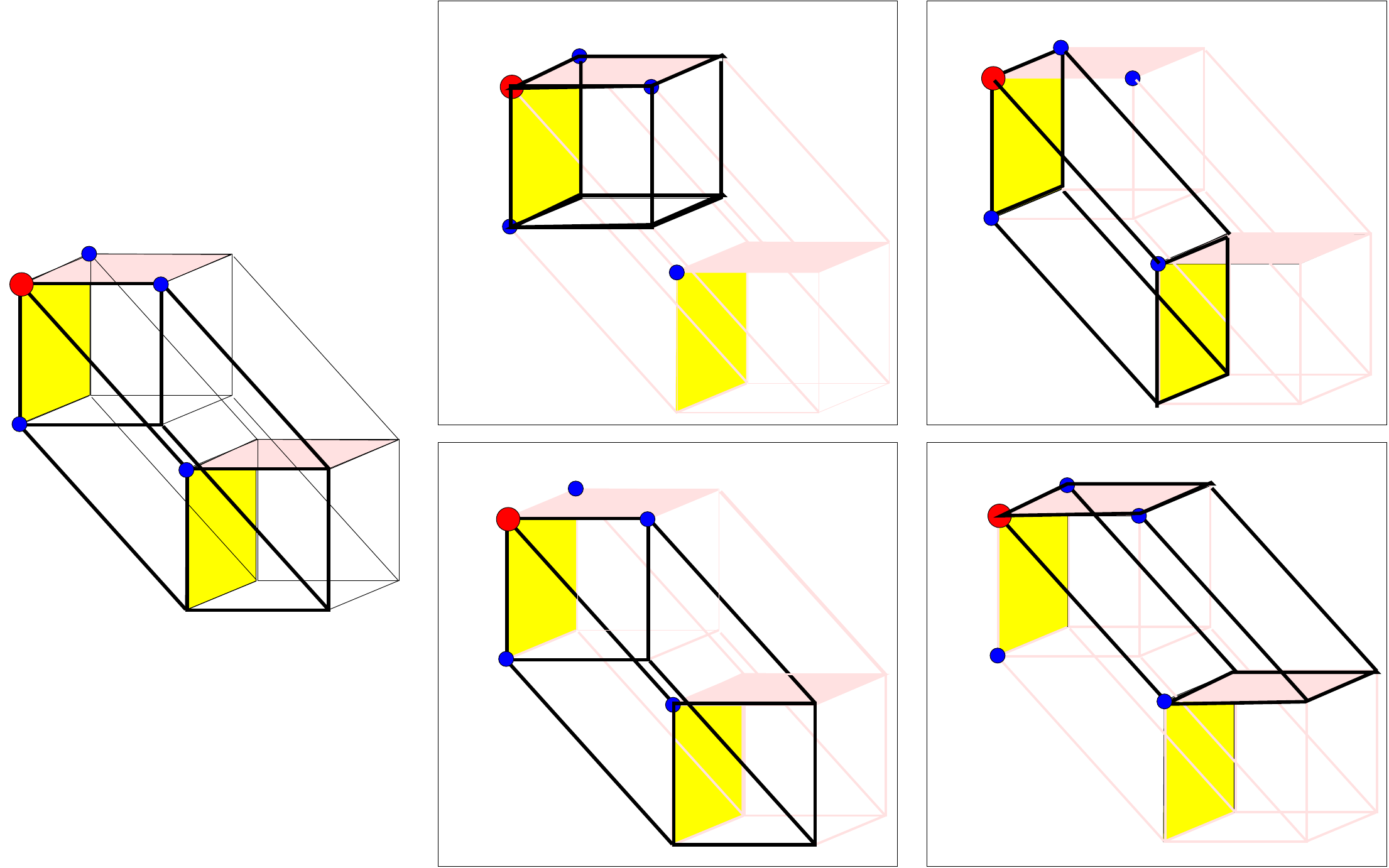_t}}
	    	   \caption{The Cubic model $\whso$ for $SO(3)$ (taken
			   		from \cite{sss2})}
	    	   \label{fig:so3-model}
	    	  \end{center}
	    	\end{figure} 
	By viewing $SO(3)$ as quaternions,
	we can represent it by the boundary $\partial [-1,1]^4$
	of the $4$-dimensional cube $[-1,1]^4$, after
	identification of opposite pairs of faces.
	After the identification,
	\ignore{
	First represent $S^3$ as the boundary of the standard
	4-cube $[-1,1]^4$ via the map 
		\beql{whq}
			\mmatx[rl]{
			\mu:& \partial([-1,1]^4) \to S^3\\
			    & q \mapsto \frac{q}{\|q\|_2}.
			}
			\eeql
	}
	we have $4$ copies of the standard 3-cube $C=[-1,1]^3\ib \RR^3$
	which are shown as $C_w, C_x, C_y, C_z$ in
	\refFig{so3-model}.  We can now do subdivision on these cubes.
	This ``cubic model''\footnote{
		We are indebted to the late Stephen Cameron who first
		brought it to our attention (June 2018).
		This paper is dedicated to his memory.
	} of $SO(3)$ was known to Canny \cite[p.~36]{canny:thesis},
	and to Nowakiewicz \cite{nowakiewicz:mst:10}.
	To our knowledge, this model has never been systematically
	developed before.  New data structures and algorithms
	for this representation are needed
\ifarxiv{%
	\cite{zhang-chiang-yap:se3-robot:24,zhang:thesis}.
	}{%
	\cite{zhang-chiang-yap:se3-robot:24}.
	}
	This paper addresses a mathematical question about
	this representation.

\dt{Brief Literature Overview.}
	Besides the above introduction to path planning,
\ifarxiv{%
	there are many surveys such as
	\cite{hss:motionplanning:crc:17} and even meta-reviews
	\cite[Table 1]{mobileRobot-survey:21}.
	}{%
	there are many surveys
	\cite{zhang-yap:distortion-arxiv:25}.
	}
\ignore{
	Here is a brief review our work on SSS planers:
		Our first paper \cite{sss1} %
		introduce the fundamental concepts of SSS such
		as resolution-exactness and soft predicates;
		SSS planners for the disc and triangle robots were implemented.
		The papers
		\cite{yap-luo-hsu:thicklink:16,luo-chiang-lien-yap:link:14}
		address 2-link planar robot with 4DOF: novelties including
		planning in the configuration space (a torus) with a 
		diagonal band is removed,
		and extension to ``thick robots''.
		In \cite{zhou-chiang-yap:complex-robot:20}, %
		we address ``complex'' planar rigid robots, i.e.,
		robots modeled by an arbitrary simple polygon.
		The algorithms of computational geometry 
		need high combinatorial complexity, while SSS
		supports a simple decomposition technique that is linear
		in the complexity of the polygon.
		We began addressing 5DOF spatial robots in \cite{rod-ring}:
		the configuration space is $\RR^3\times S^2$ corresponds
		to axially-symmetric robots.  We designed and implemented
		planners for the rod robot (a well-studied case in theory)
		and the ring robot (first in the literature).
		Several new techniques were needed to handle
		the complexities of 3D.
}%
	In this paper, we study $SO(3)$ as a metric space.
	Since $3D$ rotations arise in applications such
	as computer vision and graphics, many $SO(3)$ metrics are known.
	Huynh \cite{huynh:metrics-rotation:09} listed
	six of these metrics $\Phi_i$ ($i=1\dd 6$).
	We will focus on $\Phi_6$, simply calling
	it the \dt{natural distance} for $SO(3)$ because
	it has all the desirable properties and respects the
	Lie group structure of $SO(3)$. 
	Basically, $\Phi_6(R_1,R_2)$ is the angle of the rotation
	$R_1\inv R_2$ about its rotation-axis.
	\savespace{
		Given rotation matrices
	$R_1, R_2$, $\Phi_6(R_1,R_2)\as \|\log(R_1R_2^T)\|$;
	equivalently, $\Phi_6(q_1,q_2)=2\arccos(|q_1\cdot q_2|)$ if
	we replace $R_i$ by the corresponding unit quaternion $q_i$
	with dot-product $q_1\cdot q_2$.
	}%
\ifarxiv{%
	Another literature on $SO(3)$ comes from
	mathematics and mechanical engineering
	(Selig \cite{selig:robotics:bk}
	and Roth and Bottema \cite{bottema-roth:bk}).
	}{%
	A slightly abbreviated version of
	this arXiv paper will appear in the 3rd 
	}

\ifarxiv{%
	A slightly abbreviated version of this paper will
	appear in the LNCS proceedings of the
	\myHref{https://ima.org.uk/25349/3rd-ima-conference-on-mathematics-of-robotic}
	{3rd IMA Conference on Mathematics of Robotics},
		Manchester, UK.  24-26 Sep, 2025.
	}{%
	{\em More complete references and any
	missing proofs in this paper may be found in the arXiv version
	of this paper}
	\cite{zhang-yap:distortion-arxiv:25}.
	}
	
	\savespace{
	Yershova et al.~\cite{yershova+3:so3-hopf:10}) addressed the
	problem of sampling $SO(3)$; this is critical to
	sampling-based planners, just as subdivision is for us.
	}
	\ignore{
	\bitem
		\item $\Phi_1$ is based on Euler angles: given two rotations
			$\bfalpha=(\phi,\theta,\psi)$,
			and $\bfalpha'=(\phi',\theta',\psi')$, we have
			$\Phi_1(\bfalpha,\bfalpha')\as\|\bfalpha-\bfalpha'\|_2$
			where $\norm{\cdot}_2$ is the Euclidean norm.
			Unfortunatedly, this parametrization 
			of $SO(3)$ has well-known singularities ($\beta=0$)
			and so we will not discuss it further.
		\item $\Phi_2(\bfq,\bfq')
				\as \min\set{\|\bfq-\bfq'\|_2,\|\bfq+\bfq'\|_2}$
			where $\bfq,\bfq'\in\RR^4$ are unit quaternions.
			This metric was used by Ravani and Roth
			\cite{ravani-roth:motion:83}
			(see also \cite{yershova+3:so3-hopf:10}).
		\item $\Phi_3(\bfq,\bfq')\as \arccos(|\bfq\cdot \bfq'|)$
		where $\bfq\cdot\bfq'$ denotes scalar product.
			It was used by Wunsch et al \cite{wunsch+2:post:97}.
		\item $\Phi_4(\bfq,\bfq')\as 1-|\bfq\cdot\bfq'|$
			was used by Kuffner \cite{kuffner:distance:04}.
		\item $\Phi_5(R_1,R_2)\as \|I -R_1R_2^T\|_F$
			where $R_1,R_2\in SO(3)$ and
			$\norm{\cdot}_F$ is the Frobenius norm on matrices.
			This was Larochelle et al.~\cite{larochelle+2:distance:07}.
		\item $\Phi_6(R_1,R_2)\as \|\log(R_1R_2^T\|$
			where the logarithm of a rotation matrix gives
			its skew-symmetric
			\cite{park:distance:95,park-ravani:smooth:97}
	\eitem
	She investigated whether these metrics are
	invariant (left-, right- or bi-invariant) 
	under the group actions in $SO(3)$,
	and whether two $\Phi_i,\Phi_j$
	are boundedly (resp., functionally) equivalent to each other.
	}
	\ignore{%
		Since $SO(3)$ is a group, it is natural to ask if metrics
		are invariant under group actions. 
	A metric $\Phi(R_1,R_2)$ is
	said to be \dt{bi-invariant} if
	$\Phi(RR_1,RR_2)=\Phi(R_1,R_2)=\Phi(R_1R,R_2R)$ 
	for all $R,R_1,R_2\in SO(3)$.  If only the left (resp., right)
	equation hold, then $\Phi$ is left (resp., right) invariant.
	Two metrics $\Phi_i,\Phi_j$ are \dt{boundedly equivalent}
	if there exists some $0<a<b$ such that
	$\Phi_i(R_1,R_2)/\Phi_j(R_1,R_2) \in [a,b]$.  Finally, they
	are \dt{functionally equivalent}
	}%


\section{Distortion in Cubic Models for $S^n$}
	We call $\sqs[n]\as \partial([-1,1]^{n+1})$ 
	the \dt{cubic model} of $S^n$ and 
	consider the homeomorphism
			$\mu_n : \sqs[n] \to S^n$
	where
		\beql{wh}
			\mu_n(q)\as q/\|q\|_{2}
			\quad\text{and}\quad
			\mu_n\inv(q) \as q/\|q\|_{\infty}
		\eeql
	and $\|q\|_p$ denotes the $p$-norm.
	Viewing $S^n$ and $\sqs[n]$ as metric spaces
	with the induced metric \cite[p.~3]{petersen:riemannian:bk},
	the \dt{cubic representation} $\mu_n$ 
	introduces a distortion in the distance function of $S^n$.
	We want to bound this distortion.

	In general, if $(X,d_X)$ and $(Y,d_Y)$ are metric spaces
	and $f:X\to Y$ is continuous, we define the
	\dt{distortion range} of $f$ to be the closure of the set
		\beql{Df}
			D_f \as \set{ \frac{d_Y(f(p),f(q))}{d_X(p,q)}:
				p,q\in X, p\neq q}.
		\eeql
	If the distortion range is $[a,b]$, then the
	\dt{distortion constant} of $f$ is
		\beql{C0}
			C_0(f)\as \max\set{b,1/a}.
		\eeql
	Note that $C_0(f)\ge 1$ is the largest
	expansion or contraction factor produced by the map $f$.
	In \cite{sss2}, $C_0(f)$ was introduced as the subdivision
	atlas constant.
	If $C_0(f)=1$, then $f$ is just an isometry.
	In \cite{rod-ring}, we showed\footnote{
		We originally claimed that the range is $[1/\sqrt{3},1]$;
		Zhaoqi pointed out that the correct range is $[1/3,1]$.
		}
	that the map $\mu_2$ has distortion range $[\tfrac{1}{3},1]$.
	The proof for $\mu_2$ used elementary
	geometry which is not easily generalized to $\mu_3$.
	In this paper, we provide the proper mathematical
	framework for a generalization to 
	any $\mu_n$.  This paper will focus on $\mu_3$
\ifarxiv{%
	(see \cite{zhang:thesis} for the general case).
	}{%
	(see Zhang's thesis for the general case).
	}
	Our main theorem is the following:

	\bthmT[mu3]{Distance Distortion Range for $\mu_3$}
			$$D_{\mu_3} = [\tfrac{1}{4},1].$$
		Thus the distortion constant for $\mu_3$ is $4$.
	\ethmT

	The key to the proof lies in exploiting the
	Riemannian metric of $S^3$ and $\sqs[3]$.
	In practical applications,
	we want representations whose distortion constant is as small
	as possible,
	subject to other considerations.
	E.g., our SSS planner \cite{zhang-chiang-yap:se3-robot:24}
	(like many algorithms)
	uses an $\veps>0$ parameter to discard
	a subdivision box $B$ if ``$\veps> width(B)$''.   
	Clearly $width(B)$ is a distorted substitute
	for distance in $SO(3)$, but how distorted is it?

	The following simple lemma is very useful:

	\blemT[composition]{Composition of Distortion Range}
		If $f:X\to Y$ and $g:Y\to Z$
		have distortion ranges of $[a,b]$
		and $[c,d]$ (respectively),
		then the distortion range of $h=g\circ f: X\to Z$ is contained
		in $[ca,db]$.  If $c=d$, then $D_h=[ca,cb]$. 
	\elemT
\ifarxiv{
	\bpf
		For $p\neq q\in X$,
			\beqarrys
			\frac{d_Z(h(p),h(q))}{d_X(p,q)}
				&=& \frac{d_Z(g(f(p)),g(f(q)))}{d_X(p,q)}\\
				&=& \frac{d_Z(g(f(p)),g(f(q)))}{d_Y(f(p),f(q))}
						\cdot
					\frac{d_Y(f(p),f(q))}{d_X(p,q)}\\
				&\in& [c,d]\cdot [a,b] = [ca, db].
			\eeqarrys
	\epf
}{
	See proof in \cite{zhang-yap:distortion-arxiv:25}.
}%

	To apply this lemma, let $f$ be the map $\mu_3$
	and for $K>0$, let
		$g_K: \partial [-K,K]^{3}\to \partial [-1,1]^{3}$
	where $g_K(q)=q/K$.  So $g_K$ has distortion range
			$[\tfrac{1}{K},\tfrac{1}{K}]$.
	Then \refLem{composition} implies that the map
			$g_K(\wh{\cdot}):\partial[-K,K]^3\to S^3$
	has distortion range $[1/4K,1/K]$.  Hence the distortion
	constant becomes $C_0=\max\set{1/K,4K}$ (see \refeq{C0}).
	By choosing $K$ to minimize the distortion constant, this
	proves:

	\bthmT[optDistortion]{Parametric Cubic Models}
		Consider representations
			$$\mu_{n,K}: \partial [-K,K]^n\to S^n$$
		which are parametrized by $K>0$.
		\benum[(a)]
		\item ($n=3$) The optimal distortion of $C_0=2$
		is achieved when $K =1/2$.
		\item ($n=2$) The optimal distortion of
				$C_0=\sqrt{3}$
		is achieved when
				$K =1/\sqrt{3}$.
		\eenum
	\ethmT
	In practice, we would like $K$ to be a dyadic number
	(BigFloats) so that subdivision (which is typically
	reduced to bisection) can be carried
\ifarxiv{%
	out exactly without error in binary notations.
	}{%
	out exactly.
	}
	This implies we should choose $K=1/2$ when $n=2$ 
	to get a suboptimal distortion of $C_0=2$.  This remark
	is relevant for our 5DOF robots in \cite{rod-ring}. 

\ignore{MOVE:
	For $n=3$, we need to introduce an involution
	on $S^3$ to represent $SO(3)$; this involution
	results in a corresponding cubic model $\whso[3]$.
	Both $\sqs[2]$ and $\whso[3]$ appear to give us the first	
	subdivision schemes for these spaces with the
	correct global topology; they were
	implemented in our SSS framework
	\cite{rod-ring,zhang-chiang-yap:se3-robot:24}.
}
\ignore{
	It is obvious how to do subdivision in $\sqs$.
	This is illustrated in \refFig{cone-disc-ring}(b).
	After the first subdivision of $\sqs$ into 6 faces,
	subsequent subdivision is just the usual quadtree subdivision
	of each face.  We interpret the subdivision of $\sqs$
	as a corresponding subdivision of $S^2$.  In
	\cite{sss2}, we give the general framework using the notion of
	\dt{subdivision charts and atlases}
	(borrowing terms from manifold theory).
	}%

\ignore{
	In order to do subdivision in $S^2$, we introduce the
	\dt{subdivision charts and atlases}
	(borrowing terms from manifold theory).  
	First, we subdivide $\sqs$ into its 6 faces, denoted
	$S_i$ ($i\in I$) where
	$I \as \set{+,-}\times \set{x,y,z}$ serves as the
	index set for the 6 faces.  Each $i\in I$ corresponds
	to a semi-axis.  
	This is illustrated in \refFig{cone-disc-ring}(b).
	Intuitive, each $S_i$ is the root of a
	subdivision tree (actually quadtree).
	A box $B$ in one of these subdivision trees
	represents a corresponding region of $S^2$.  More precisely,
	we define six maps called \dt{charts},
	$\mu_i:[-1,1]\to S^2$ such that the range of these six charts
	covers $S^2$.   A more explicit description is to write
	$\mu_i(a,b)\as \wh{m_i(a,b)}$ where each $m_i: [-1,1]\to S_i$
	in a natural way.
	E.g., if $i=+x$ then $m_i(a,b)=(1,a,b)$,
	and $i=-y$ then $m_i(a,b)=(a,-1,b)$, etc.
	We can now do subdivision for $S^2$ via these charts
	(the initial subdivision of $S^2$ simply splits into the
	6 $S_i$'s, but subsequent splits creates four congruent
	subsquares, as in a quadtree).
	The set $\set{\mu_i: i\in I}$ charts is called
	an \dt{atlas} for $S^2$;
	see \cite{sss2} for a general treatment of 
	subdivision charts and atlases.
	}

	\ignore{
	{\bf Boxes of $\RR^3\times S^2$.}
	By ``box'' of $\RR^3\times S^2$, we mean a set
	$B\ib \cspacehat =\RR^3\times \sqs$ of the form $B=B^t\times B^r$
	where $B^t$ is the usual axis-parallel
	box in $\RR^3$, and $B^r$ is
	either equal to $\sqs$ or equal to a box of some face of $S_i$
	($i\in I$).   In the latter case, we represent $B^r$
	as a box in $[-1,1]$ together with an indicator $i\in I$.
	We call $B^t$ and $B^r$ the 
	\dt{translational} and \dt{rotational} components of $B$.
	Two $d$-dimensional boxes are \dt{adjacent} if
	their intersection is a non-degenerate $d-1$-dimensional set.
	Besides the adjacencies of boxes of $S_i$ that comes from subdivision,
	each box on the boundary of $S_i$ is also adjacent with some $S_j$
	($i\neq j$) which are relatively easy to maintain.
	%
	Let $m_{B}$ and $r_{B}$ denote the center (``midpoint'') and
	radius of $B^t$ (radius is the distance from the center to any corner
	of the cube).   In other words, $m_B$ and $r_B$
	ignores the rotational part.  There are 2 kinds of splits:
	\dt{T-split($B$)}
	splits $B^t$ into 8 congruent subboxes,
	which are paired with the unsplit $B^r$ to produce $8$ children.
	\dt{R-split($B$)}
	has 2 cases: if $B^r=\sqs$, then it splits $\sqs$ into $6$ faces.
	Otherwise, $B^r$ is a square of $\sqs$, and we split it into
	$4$ congruent squares.  In either case, each of the squares
	are then paired with $B^t$ to form the children of $B$.
	}%
	
\ignore{
\section{Cubic Model $\whso[3]$ for $SO(3)$}
	There is a natural metric on $S^3$.
	It is also known the function
		$\Phi_6(R,R')=\|\log (RR\tr)\|$ 
	for $R,R'\in SO(3)$ is a metric
	(see proof in \cite[Appendix B]{huynh:metrics-rotation:09}).
	In the following, we show more generally that any
	metric on $S^3$ induces a metric on
	$SO(3)$ via the involution map $q\mapsto -q$:
	let $(X,d)$ be a metric space with
	an involution map $-: X\to X$ (i.e., $-(-(x))=x$).
	We say $x$ and $-x$ are equivalent, denoted $x\sim -x$;
	let $\olx$ denote the equivalence class $\set{x,-x}$.
	Consider the space $\olX=\set{\olx: x\in X}$.
	We define the function $\old:\olX^2\to\RR_{\ge 0}$
	where $\old(\olx,\oly)=\min\set{d(x,y),d(x,-y)}$.

	\bleml[invol]
	If metric $d$ in invariant under the involution,
			i.e., $d(x,y)=d(-x,-y)$ for all $x,y\in X$,
	then the function $\old(\olx,\oly)$ is well-defined,
	and defines an induced metric space $(\olX,\old)$.
	\eleml
	\ignore{
	\bpf
		Suppose $x\sim x'$ and $y\sim y'$.
		Then we may verify that
			$$\min\set{d(x,y),d(x,-y)}=\min\set{d(x',y'),d(x',-y')}.$$
		This proves that $\old(\olx,\oly)$ is well-defined
		(it does not depend on the choice of representatives of
		$\olx,\oly$).
		To show that $\old$ is a metric, it is clear that
		$\old(\olx,\oly)\ge 0$ with equality iff $\olx=\oly$.
		Further, $\old(\olx,\oly)=\old(\oly,\olx)$.  To show
		the triangular inequality:
		\beqarrys
		\old(\olx,\oly)+\old(\oly,\olz)
			&=& d(\sigma x,y)+d(y,\sigma' z)
				& \text{(for some $\sigma,\sigma'\in\set{\pm 1}$)}\\
			&\ge& d(\sigma x,\sigma' z)
				& \text{(since $d$ is a metric)}\\
			&\ge& \old(\olx,\olz)
				& \text{(definition of $\old$)}.
		\eeqarrys
	\epf
	}
	Since $SO(3)$ is the equivalence classes
	of $S^3$ under the involution $q\mapsto -q$,
	and the metric on $S^3$ is invariant under this involution,
	we conclude that $SO(3)$ has a metric 
	induced by the metric on $S^3$.
	Our next question is to derive the distortion constant
	for $(\cdot)\WH$.

	\bthmT[distortion3]{Distortion of the Cubic Model of $S^3$}
		The map $(\cdot)\WH: S^3\to \sqs[3]$
		has distortion range $[1,4]$.
		It follows that the distortion constant is $C_0=4$.
	\ethmT

	For this proof, we need the tools of differential geometry,
	specifically the metric on $S^3$ is Riemannian.
	Since $\sqs[3]$ is piece-wise Riemannian, we can
	restrict our argument to each face of $\sqs[3]$ separately.
	As Riemannian spaces,
	we can reduce the argument to an infinitesimal analysis
	of distortion at each point.
	The map $(\cdot)\WH: S^3\to\sqs[3]$ induces a map $\tau_q$ 
	(for each $q\in S^3$) from the
	tangent space $T_q S^3$ at $q$
	to the tangent space $T_{\whq}\sqs[3]$ at $\whq$.
	Then the distortion range at $q$
	is the closure of
	the set $\set{\frac{\|\tau_q(X)\|}{\|X\|}: X\in T_q S^3}$.
	Taking the union over all $q\in S^3$ gives the distortion
	range of the map.  See the actual computation in
	Appendix A.

}

\section{Reduction to Distortion in Riemannian Metric}
\label{supplement}
	We now reduce the distortion of maps between metric spaces to
	the distortion of
	diffeomorphisms $F$ between Riemannian manifolds $M,N$,
		$$F : M \to N$$
	where $M,N$ are smooth $n$-dimensional manifolds.
	Our terminology and notations in differential geometry
	follow \cite{petersen:riemannian:bk,lee:manifold:bk}.
	A Riemannian manifold is a pair $(M,g_M)$ where $g_M$
	(called a \dt{Riemannian metric})
	is an inner product (positive, bilinear, symmetric)
	on vectors $u_\bfp, v_\bfp$ in the tangent space
	$T_\bfp M$ ($\bfp\in M$). 
	We write ``$g_M\bang{u_\bfp,v_\bfp}$''
	instead of $g_M(u_\bfp,v_\bfp)$ to
	suggest the inner product property. 
	Also, write
	$|u_\bfp|_{g_M} \as \sqrt{g_M\bang{u_\bfp,u_\bfp}}$.
	As $(N,g_N)$ is also a Riemannian manifold, we get an
	(induced) Riemannian metric $F^*(g_N)$ for $M$ where
		\beql{F*gN}
			F^*(g_N)\bang{u_\bfp,v_\bfp} \as
			g_N\bang{D_F(u_\bfp), D_F(v_\bfp)},\eeql
	called the \dt{$F$-pullback} of $g_N$
	\cite[p.333]{lee:manifold:bk}, and $D_F$
	is the Jacobian of $F$.
	Then we define the \dt{metric distortion range}
	of $F$, namely, the closure of the set
		\beql{MDF}
			MD_F= [m_F,M_F]
				\as \set{
					\frac{ |v_\bfp|_{F^*(g_N)}  }{  |v_\bfp|_{g_M} }
					: v_\bfp \in T_\bfp M, \bfp\in M}.\eeql
	%

	From any Riemannian manifold $(M,g_M)$, we derive
	a distance function\footnote{
		Following \cite[p.~328]{lee:manifold:bk},
		we call $d_{g_M}$ a \dt{distance function}, reserving ``metric''
		for $g_M$.}
	given by
		\beql{distance}
			d_{g_M}(\bfp,\bfq)
				\as \inf_{\bfp\lto[\pi] \bfq}\quad 
						\int_0^1 |\pi'(t)|_{g_M}dt
				 = \inf_{\bfp\lto[\pi] \bfq}\quad 
						\int_0^1 \sqrt{g_M\bang{\pi'(t),\pi'(t)}} dt
			\eeql
	where $\bfp\lto[\pi] \bfq$ means that $\pi:[0,1]\to M$
	is a smooth curve with $\pi(0)=\bfp$, $\pi(1)=\bfq$;
	also $\pi'(t)$ is the tangent vector to the curve at $\pi(t)$.  
	The pair $(M,d_{g_M})$ is now a metric space
	\cite[Theorem 13.29, p.~339]{lee:manifold:bk}.
\ifarxiv{
	In particular, the distance function $d_{F^*(g_N)}$ 
	derived from the $F$-pullback metric has this 
	characterization:


	\bleml[F*gN]
		$$ d_{F^*(g_N)}(\bfp,\bfq) = d_{g_N}(F(\bfp),F(\bfq)).$$
	\eleml

	For any $\bfp,\bfq\in M$, eonsider these two sets of paths 
		\beql{piMN}
			\mmatX{
			\Pi_M(\bfp,\bfq)\as \set{\pi
			\text{ is a path from $\bfp$ to $\bfq$ in $M$}}\\
			\Pi_N(F(\bfp),F(\bfq))\as \set{\pi
			\text{ is a path from $F(\bfp)$ to $F(\bfq)$ in $N$}}
			}
		\eeql
	Our proofs exploit the fact that there is a simple
		bijection $\pi\mapsto F\circ\pi$ that takes
	$\pi\in \Pi_M(\bfp,\bfq)$ to
	$F\circ\pi\in \Pi_N(F(\bfp),F(\bfq))$.
}{
}

\ifarxiv{
	\bpf
		\begin{align}
			d_{F^*(g_N)}(\bfp,\bfq)
				&= \displaystyle{\inf_{\bfp\lto[\pi]\bfq}} \int_0^1
						\sqrt{F^*(g_N)\bang{\pi'(t),\pi'(t)}}dt
						\nonumber\\
				&= \displaystyle{\inf_{\bfp\lto[\pi]\bfq}} \int_0^1
						\sqrt{g_N\bang{D_F(\pi'(t)),D_F(\pi'(t))}}dt
						& \text{(by \refeq{F*gN})}
						\nonumber\\
				&= \displaystyle{\inf_{\bfp\lto[\pi]\bfq}} \int_0^1
					\sqrt{g_N\bang{(F\circ\pi)'(t)),(F\circ\pi)'(t)}}dt
					\nonumber\\
				&= \displaystyle{\inf_{F(\bfp)\lto[F\circ\pi]F(\bfq)}}
						\int_0^1
					|(F\circ\pi)'(t)|_{g_N} dt
					\nonumber\\
				&= d_{g_N}(F(\bfp),F(\bfq))
						& \text{(by \refeq{distance})}
					\label{eq:FgN}
		\end{align}
		where the equality in \refeq{FgN} is based on the 
		bijection $\pi\mapsto F\circ\pi$ between the sets in
		\refeq{piMN}.
	\epf
}{%
}%

	The \dt{distance distortion range} of
		$F:(M,d_{g_M})\to (N,d_{g_N})$
	(viewed as maps between metric spaces) is
		\beql{DF}
			D_F \as \set{
				\frac{d_{g_N}(F(\bfp),F(\bfq))}{d_{g_M}(\bfp,\bfq)}
					: \bfp,\bfq\in M, \bfp\ne \bfq}.
			\eeql
	We next connect distance distortion to metric distortion of $F$:
	
	\bthmT[mdistortion]{Metric Distortion}
		\ \\ If $F:M\to N$ is a smooth map between
		two Riemannian manifolds, then the
		metric distortion range of $F$ is equal
		to the distance distortion range of $F$:
				$$MD_F = D_F$$
	\ethmT

\ifarxiv{
	\bpf
	Let $MD_F=[m_F,M_F]$ as in \refeq{MDF}.
	If $\pi:[0,1]\to M$ is a smooth path in $M$, then
		$$ m_F\cdot |\pi'(t)|_{g_M}\le 
			| (F\circ \pi)'(t)|_{g_N} \le M_F\cdot |\pi'(t)|_{g_M}.$$
	\beqarrys
		d_{g_N}(F(\bfp),F(\bfq))
			 &=& \inf \set{
			 	\int_0^1 |\pi'(t)|_{g_N} dt : \pi\in
				\pi_N(F(\bfp),F(\bfq))} \\
			 &=& \inf \set{
			 	\int_0^1 |(F\circ \pi)'(t)|_{g_N} dt : \pi\in
				\pi_M(\bfp,\bfq)}
			& \text{(with bijection $\pi\mapsto F\circ\pi$)} \\
			 &\le& \inf \set{
			 	\int_0^1 M_F |\pi'(t)|_{g_M} dt : \pi\in
				\pi_M(\bfp,\bfq)} \\
			 &=& M_F\cdot d_{g_M}(\bfp,\bfq).
	\eeqarrys
		Similarly, 
		$d_{g_N}(F(\bfp),F(\bfq)) \ge m_F \cdot d_{g_M}(\bfp,\bfq)$.
		This proves that $D_F\ib MD_F$.
		To prove equality, we note that \refThm{rangemu3}
		shows the existence of $\bfq\in B_1$,
		and $\bfv\in T_\bfq B_1$ such that
			$$ m_F\cdot |\bfv|_{g_{B_1}} = |\bfv|_{g_{S^3}}.$$
		We pick $\bfp$ arbitrarily close to $\bfq$
		and $\pi$ is a geodesic from $\bfp$ to $\bfq$ such that
				$\pi'(1)=\bfv$.
		Then
			$d_{g_{\whso[3]}}(\bfp,\bfq)
					=\int_{0}^1 |\pi'(t)|_{g_{\whso[3]}} dt$
		is arbitrarily close to
			$m_F \cdot d_{g_{S^3}}(\bfp,\bfq)$.
		A similar argument shows that the upper bound $M_F$
		is achieved.
	\epf
}{
	See proof in \cite{zhang-yap:distortion-arxiv:25}.
}
\subsection{Metric Distortion Range of $\mu_3$}
	Our \refThm{mu3} is now a consequence of 
	\refThm{mdistortion} and the following theorem.
	\bthmT[rangemu3]{Metric Distortion Range for $\mu_3$}
		\ \\
		The metric distortion range $MD_{\mu_3}$
		for $\mu_3: \whso[3]\to S^3$ is
			$$[m_{\mu_3},M_{\mu_3}]= [\tfrac{1}{4},1].$$
		Moreover, there exists
		$\bfp,\bfq\in \whso[3]$, and
		$\bfu\in T_\bfp \whso[3]$, $\bfv\in T_\bfq \whso[3]$ 
			such that
			\beql{mMmu3}
				m_{\mu_3} = 
				\frac{|\bfu|_{g_{S^3}}}{|\bfu|_{g_{\whso[3]}}},
					\qquad
				M_{\mu_3} =
				\frac{|\bfv|_{g_{S^3}}}{|\bfv|_{g_{\whso[3]}}}.
			\eeql
	\ethmT
	
	Note that our theorem gives the exact metric distortion range.
	To achieve this, we will find two expressions for the
	metric norm $|v_\bfp|_{g_N}$: one that achieves the upper
	bound, and another that achieves the lower bound.

	\bpf
	Let $\mu_3: \partial[-1,1]^4\to S^3$ where
	$\partial[-1,1]^4$ is viewed as the union of $8$ cubes,
	corresponding to each choice of $w,x,y,z=\pm1$.
	By symmetry, we focus on the cube
	$B_1=\{(w,x,y,z)\in\partial[-1,1]^4:w=1\}$.  Thus
			$$\mu_3(1,x,y,z)= \tfrac{1}{r}(1,x,y,z)$$
	where	$r=\sqrt{1+x^2+y^2+z^2}$.
	If $g_{S^3}$ is the induced Riemannian metric for $S^3$,
	then $MD_{\mu_3}$ is the range of
		$\sqrt{\frac{\mu_3^\ast(g_{S^3}) \la v_\bfp,v_\bfp\ra}
			{g_{B_1}\la v_\bfp,v_\bfp\ra}}$	
	over $v_\bfp\in T_{v_\bfp}B_1$ for all $\bfp\in B_1$
	and $\mu_3^\ast(g_{S^3})$ is pull-back metric.
	First compute the Jacobian of $\mu_3$:

		\beql{jmu3}
		D_{\mu_3}=J_{\mu_3}  =\left(\begin{array}{ccc}
			\frac{\partial(1/r)}{\partial x}
				& \frac{\partial(1/r)}{\partial y}
					& \frac{\partial(1/r)}{\partial z}\\
			\frac{\partial(x/r)}{\partial x}
				& \frac{\partial(x/r)}{\partial y}
					& \frac{\partial(x/r)}{\partial z}\\
			\frac{\partial(y/r)}{\partial x}
				& \frac{\partial(y/r)}{\partial y}
					& \frac{\partial(y/r)}{\partial z}\\
			\frac{\partial(z/r)}{\partial x} 
				& \frac{\partial(z/r)}{\partial y} 
					& \frac{\partial(z/r)}{\partial z}
			\end{array}\right) 
		 = \frac{1}{r^3}\left(\begin{array}{ccc}
				-x & -y & -z \\
				x^2-r^2 & xy & xz \\
				xy & y^2-r^2 & yz \\
				xz & yz & z^2-r^2
		\end{array}\right).
		\eeql
	In the following, we may assume that 
			$v_\bfp = (\dx,\dy,\dz)\tr \in T_\bfp B_1$
	where
		\beql{vbfp}
			g_{B_1}(v_\bfp,v_\bfp)= \la v_\bfp,v_\bfp\ra =
				\dx^2+\dy^2+\dz^2=1.
		\eeql
		\begin{align}
	\text{The pull-back metric}\quad
			\mu_3^\ast(g_{S^3}) \la v_\bfp,v_\bfp \ra\  
			&\lefteqn{= \bang{ J_{\mu_3}\bigcdot v_\bfp,
					J_{\mu_3}\bigcdot v_\bfp}
				}	 \nonumber\\
			&= v_\bfp\tr (J_{\mu_3}\tr \bigcdot J_{\mu_3})	v_\bfp
					\nonumber\\
			&= \frac{1}{r^4}
					v_\bfp\tr\bigcdot
					\mmatP{r^2-x^2 & -xy & -xz \\
						-xy & r^2-y^2 & -yz \\
						-xz & -yz & r^2-z^2 }
					\cdot v_\bfp
				& \text{(from \refeq{jmu3})} \nonumber\\
			&= \frac{E}{r^4}
		\end{align}
	where
		\begin{align*}
			E &\as (r^2-x^2)\dx^2 +(r^2-y^2)\dy^2 +(r^2-z^2)\dz^2
				- 2\big(xy\dx\dy+yz\dy\dz+xz\dx\dz\big)
		\end{align*}
	To facilitate further manipulation, rewrite $E$ in the compact form
		\beql{KEY}
			E= \sum_i (r^2-x_i^2)\dx_i^2
					- 2\sum_{i,j} x_ix_j\dx_i\dx_j
		\eeql
	where $(x,y,z)=(x_1,x_2,x_3)$ and the sums
	$\sum_i, \sum_{i,j}$ (and $\sum_{i,j,k}$) are interpreted
	appropriately:
		$i,j,k$ range over $\set{1,2,3}$, with
		$i$ chosen independently, 
		but $j$ chosen to be different from $i$,
		and $k$ chosen to be different from $i$ and $j$.
		Thus each sum has exactly 3 summands.
	
	To obtain upper and lower bounds on
		$\mu_3^\ast(g_{S^3})\la v_\bfp,v_\bfp \ra= E/r^4$, we need
		to express $E$ in two different ways:
	\benum[(A)]
	\item
		For the lower bound, 
		\begin{align*}
			E &= \sum_i (r^2-x_i^2)\dx_i^2
					- 2\sum_{i,j} x_ix_j\dx_i\dx_j\\
			 &= \sum_{i,j,k} (1+x_j^2+x_k^2)\dx_i^2
					- 2\sum_{i,j} x_ix_j\dx_i\dx_j\\
			 &= \sum_i \dx_i^2
			 	+ \sum_{i,j} (x_i^2\dx_j^2+x_j^2\dy_i^2)
					- 2\sum_{i,j} x_ix_j\dx_i\dx_j\\
			 &= 1 + \sum_{i,j}(x_i\dx_j-x_j\dx_i)^2
						& \text{(as $1=\sum_i\dx_i^2$)}\\
		\end{align*}
		Hence,
		\ignore{
		\begin{align*}
			\mu_3^\ast\la v_\bfp,v_\bfp \ra
			&= \frac{1+ \sum_{i,j} (x_i\dy_j-y_j\dx_i)^2}
					{r^4} \\
			&\ge	\frac{1}{(1+x^2+y^2+z^2)^2}  \\
			&\ge	\frac{1}{16}.
		\end{align*}
		}
		$$ \mu_3^\ast(g_{S^3})\la v_\bfp,v_\bfp \ra
			= \frac{1+ \sum_{i,j} (x_i\dy_j-y_j\dx_i)^2}
					{r^4} 
			\ge	\frac{1}{(1+x^2+y^2+z^2)^2}  
			\ge	\frac{1}{16}.$$
		The last two inequalities become equalities when
		$x=y=z=1$ and $\dx=\dy=\dz=1/\sqrt{3}$.
		This proves a tight lower bound of $1/16$ for
			$\mu_3^\ast(g_{S_3})\la v_\bfp,v_\bfp \ra$.
	\item
		To obtain an upper bound, we rewrite $E$ as follows:
		\begin{align*}
		E &= \sum_{i,j,k} (1+x_j+x_k)\dx_i^2
				- 2\sum_{i,j} x_ix_j\dx_i\dx_j \\
		&= \sum_i\dx_i^2 + \sum_{i,j,k} x_i^2(\dx_j^2+\dx_k^2)
				- 2\sum_{i,j} x_ix_j\dx_i\dx_j \\
		&= 1+ \sum_{i} x_i^2(1-\dx_i^2)
				- 2\sum_{i,j} x_ix_j\dx_i\dx_j 
						& \text{(as $1=\sum_i\dx_i^2$)}\\
		&= \Big(1+\sum_i x_i^2\Big)-\sum_i x_i^2\dx_i^2
				- 2\sum_{i,j} x_ix_j\dx_i\dx_j \\
		&= r^2 -\Big(\sum_i x_i\dx_i\Big)^2 
		\end{align*}
	\ignore{
		\begin{align*}
			\mu_3^\ast(g_{S_3})\la v_\bfp,v_\bfp \ra
			&=	\frac{ r^2 -(x\dx+y\dy+z\dz)^2}
					{r^4} \\
			&\le	\frac{1}{r^2}  \\
			&\le	1.
		\end{align*}
	}%
		$$\mu_3^\ast(g_{S_3})\la v_\bfp,v_\bfp \ra
			=	\frac{ r^2 -(x\dx+y\dy+z\dz)^2}
					{r^4} \\
			\le	\frac{1}{r^2}  \\
			\le	1.$$
	The last two inequalities
	are equalities when $x=y=z=0$.
	\eenum
	We have therefore established that 
		$$\sqrt{\mu_3^\ast(g_{S_3})\la v_\bfp,v_\bfp \ra}
			\in [\tfrac{1}{4},1]$$
		and these bounds are achievable.
	\epf
	
	With the above compact notation, we could generalize
\ifarxiv{%
	the argument to $\mu_n$ (see \cite{zhang:thesis}), showing
	}{%
	the argument to $\mu_n$
	(see \cite{zhang-yap:distortion-arxiv:25}), showing
	}
	that $MD_{\mu_n} =[1/(n+1),1]$.


\ignore{ 

\blbar
TEMPORARY NO IGNORE:
\bpf
\cored{THIS SHOULD BE GREATLY STREAMLINED!!!}

	Given a \Riman\ manifold $(M,g_M)$,
	the \dt{length} of a tangent vector $v_p\in T_pM$ is
		\[
		|v_p|=\sqrt{g_M\la v_p,v_p\ra}.\]
	The length of a curve $\pi:[0,1]\to M$ is defined as
		\[
		\len(\pi) =\int_0^1|\pi'(t)|\ddt
			=\int_0^1\sqrt{g_M\la\pi'(t),\pi'(t)\ra}\ddt.\]
	Given $p,q\in M$, the distance between $p$ and $q$ is defined as the
	length of the shortest curve connecting $p$ and $q$, i.e.,
		\begin{equation}\label{curve_length}
		d_M(p,q) =\inf_{\pi:[0,1]\to M,\pi(0)
				=p,\pi(1)=q}\len(\pi).
		\end{equation}
	The metric on $M$ is induced by this distance function $d_M$. When
	the manifolds are $\Ws$ and $\Xs$, the metrics $\disW$ and $\disX$
	are also defined by (\refEq{curve_length}). Therefore, the distortion
	between $\bfp,\bfq\in\disW$ where $\mu(\bfp)=\gamma$,
	$\mu(\bfq)=\zeta$ is
		\[\frac{\disX(\gamma,\zeta)}{\disW(\bfp,\bfq)}
			= \frac{\inf\int_0^1\sqrt{g_\Xs\la
			(\mu\circ\pi)'(t),
			(\mu\circ\pi)'(t)\ra}\ddt}{\inf\int_0^1\sqrt{g_\Ws\la\pi'(t),
			\pi'(t)\ra}\ddt}
		= \frac{\inf\int_0^1\sqrt{\mu^\ast(g_\Xs)\la
			\pi'(t),
			\pi'(t)\ra}\ddt}{\inf\int_0^1\sqrt{g_\Ws\la\pi'(t),
			\pi'(t)\ra}\ddt}.\]

	We define $\ltor_\mu$ and $\utor_\mu$ as follows:

	\[\ltor_\mu \as \inf_{\bfb\in\Ws,v_\bfb\in T_\bfb\Ws}
		\sqrt{ \frac{\mu^\ast(g_\Xs)\la v_\bfb,v_\bfb\ra}
			{g_\Ws\la v_\bfb,v_\bfb\ra} }
	\]
	and
	\[\utor_\mu \as \sup_{\bfb\in\Ws,v_\bfb\in T_\bfb\Ws}
		\sqrt{ \frac{\mu^\ast(g_\Xs)\la v_\bfb,v_\bfb\ra}
			{g_\Ws\la v_\bfb,v_\bfb\ra} }.
	\]
	
	\begin{theorem}\label{tor_bound}
		\[D_\mu = [\ltor_\mu,\utor_\mu].\]
		i.e. $\ltor(\mu)=\ltor_\mu$ and $\utor(\mu)=\utor_\mu$.
	\end{theorem}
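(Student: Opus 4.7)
The plan is to establish both inclusions $D_F \subseteq MD_F$ and $MD_F \subseteq D_F$ separately; both sets are closed by definition, so limit arguments are admissible. The intuition is that $MD_F$ captures the pointwise infinitesimal stretch factors of $F$ while $D_F$ captures the global ratios, and integration links the two.

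For $D_F \subseteq MD_F$, the key observation is that the pointwise ratio
$$r(v_\bfp) \as \sqrt{F^*(g_N)\bang{v_\bfp,v_\bfp}/g_M\bang{v_\bfp,v_\bfp}}$$
is continuous on the nonzero tangent bundle $TM\setminus 0$, which is connected whenever $M$ is, so its image is an interval and $MD_F$ is a closed interval $[a,b]$. For any smooth curve $\bfp\lto[\pi]\bfq$, the pointwise inequality $a\sqrt{g_M\bang{\pi'(t),\pi'(t)}}\le \sqrt{F^*(g_N)\bang{\pi'(t),\pi'(t)}}\le b\sqrt{g_M\bang{\pi'(t),\pi'(t)}}$ integrates over $t\in[0,1]$ to bound the $F^*(g_N)$-length of $\pi$ between $a$ and $b$ times its $g_M$-length. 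Taking the infimum over all such curves and using \refeq{distance} yields $a\cdot d_M(\bfp,\bfq)\le d_{F^*N}(\bfp,\bfq)\le b\cdot d_M(\bfp,\bfq)$, so every element of $D_F$ lies in $[a,b]=MD_F$.

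For $MD_F\subseteq D_F$, given a nonzero $v_\bfp\in T_\bfp M$ I would pick a smooth curve $\pi:[0,\veps]\to M$ with $\pi(0)=\bfp$ and $\pi'(0)=v_\bfp$ (for instance, the $g_M$-geodesic through $\bfp$ with initial velocity $v_\bfp$). The standard asymptotic expansion of the Riemannian distance function around a point, coming from Gauss's lemma and normal coordinates, gives
$$d_M(\bfp,\pi(t))=t\sqrt{g_M\bang{v_\bfp,v_\bfp}}+o(t),\qquad d_{F^*N}(\bfp,\pi(t))=t\sqrt{F^*(g_N)\bang{v_\bfp,v_\bfp}}+o(t)$$
as $t\to 0^+$. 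Dividing, the distance ratio along $\pi$ tends to $r(v_\bfp)$, so $r(v_\bfp)\in\overline{D_F}=D_F$. Letting $(\bfp,v_\bfp)$ range over all nonzero tangent vectors and passing to the closure yields $MD_F\subseteq D_F$.

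The hard part will be the infinitesimal step in the second inclusion, where both asymptotic expansions must hold simultaneously: the $F^*(g_N)$ expansion requires the pullback to be a bona fide Riemannian metric near $\bfp$, i.e., that $F$ is an immersion there. In the intended application, $F=\mu_3$ is a piecewise diffeomorphism on the interior of each face of $\sqs[3]$, so the pullback is nondegenerate on each face and the argument applies face by face; the measure-zero boundary between faces contributes nothing to either distortion range by continuity. Combining the two inclusions then gives $MD_F = D_F$.
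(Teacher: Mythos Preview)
Your proposal is correct and follows essentially the same two-step structure as the paper's proof: integrate the pointwise tangent-vector bounds $a\le r(v_\bfp)\le b$ along curves to trap every distance ratio in $[a,b]$, then argue that the extremal values are attained in the limit as the two points coalesce. The paper handles the second step with a single sentence (``the equalities reach when $\bfp$ and $\bfq$ are near enough''), whereas you supply the missing justification via the first-order expansion of the Riemannian distance along a curve with prescribed initial velocity; your remark that $MD_F$ is automatically an interval by connectedness of the unit tangent bundle is also a detail the paper leaves implicit.
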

	
	\begin{proof}
	For any two $\bfp,\bfq\in \Ws$ and
	$\mu(\bfp)=\gamma,\mu(\bfq)=\zeta\in\Xs$, we consider any path
	$\pi:[0,1]\to\Ws$ such that $\pi(0)=\bfp$ and $\pi(1)=\bfq$.
	
	As $(\mu\circ\pi)'(t)$ and $\pi'(t)$ are tangent vectors on $\Xs$ and
	$\Ws$
	respectively, by definition, they satiesfies
	\[\ltor_\mu\leq\frac{|(\mu\circ\pi)'(t)|}{|\pi'(t)|}\leq\utor_\mu\]
	As an	estimation of the curve length:
	\[\len(\mu\circ\pi) =
					\int_0^1|(\mu\circ\pi)'(t) |\ddt
			\leq \utor_\mu\int_0^1|\pi'(t)|\ddt=\utor_\mu\len(\pi)\]
	and
		\[\len(\mu\circ\pi) =\int_0^1| (\mu\circ\pi)'(t)|\ddt
				\geq \ltor_\mu\int_0^1|\pi'(t)|\ddt=\ltor_\mu\len(\pi)\]
	
	The inequalities are true for all such curves, and therefore, after
	we take $\inf$ of $\len(\mu\circ\pi)$ over all such $\pi$, we still
	have
	\[\disX(\gamma,\zeta) = \inf_\pi\len(\mu\circ\pi)
			\leq\utor_\mu\inf_\pi\len(\pi)
			=\utor_\mu\disW(\bfp,\bfq)\]
	and
	\[\disX(\gamma,\zeta) = \inf_\pi\len(\mu\circ\pi)
		\geq\ltor_\mu\inf_\pi\len(\pi)
		=\ltor_\mu\disW(\bfp,\bfq).\]
	The equalities reach when $\bfp$ and $\bfq$ are near enough.
	
	Therefore, $\ltor(\mu)=\ltor_\mu$ and $\utor(\mu)=\utor_\mu$,
	thus proving
	\refThem{tor_bound}.
	\end{proof}
\epf
\elbar
}%
\ignore{
	We want to compute the distortion range of the map
		\beql{mu3}
			\mu_3: \whS^3 \to S^3.	\eeql
	The Riemannian metric $g_{S^3}$
	on $S^3$ is the canonical one
	induced from the embedding $S^3\ib \RR^4$
	\cite[Example 1.1.3, p.~3]{petersen:riemannian:bk}.
	It follows that there is an metric
		\beql{pullback-metric}
			\mu_3^*(g_{S^3}) \eeql
	on $\whS^3$ which is induced by the pullback of $\mu_3$
	\cite[p.~3]{petersen:riemannian:bk}.
	On the other hand, viewing $\whS^3$
	as a piecewise-linear (PL) polyhedral manifold
	\cite{rourke-sanderson:PL-bk}
	endowed with the Riemannian metric
		$g_{\whS^3}$
	induced by the embedding $\whS^3\ib \RR^4$
	(cf.~\cite{miller-pak:unfolding:08}).
	Call the $g_{\whS^3}$
	the natural \dt{Euclidean} Riemannian metric
	of $\whS^3$.
	Thus, we have two Riemannian
	metrics on $\whS^3$: the natural one and the induced one:
		\beql{riemannianMetrics}
			g_{\whS^3},\qquad \mu_3^*(g_{S^3})
			\eeql
	They give rise to two metrics on $\whS^3$, denoted
		$$d_{\RR^3}(\bfp,\bfq),\qquad
				d_{S^3}(\bfp,\bfq)$$
	respectively.  By THEOREM XXX,
	the distortion range of $\mu_3$
	is given by $[m(\mu_3), M(\mu_3)]$.


	Representation map is a locally homeomorphism $\mu:\Ws\to\Xs$ such
	that its inverse is an embedding restricted to each chart in an
	atlas. Given $\bfp,\bfq\in\Ws$ where $\gamma=\mu(\bfp)$ and
	$\zeta=\mu(\bfq)$, the \dt{distortion} of $\mu$ between $\bfp$ and
	$\bfq$ is
	\[D_\mu(\bfp,\bfq)=\frac{\disX(\gamma,\zeta)}{\disW(\bfp,\bfq)}.\]
	The range of this ratio for $\bfp,\bfq\in\Ws$ is the \dt{distortion}
	of $\mu$, denoted by $D_\mu$ (\cored{This is the distortion range
	$D_\mu$ defined in chapter 2 in this paper ima2025}). i.e.,
	\[ D_\mu=\left\{\tau>0 :
	\inf_{\bfp,\bfq\in\Ws}\frac{\disX(\gamma,\zeta)}
	{\disW(\bfp,\bfq)}\leq\tau\leq\sup_{\bfp,\bfq\in\Ws}
		\frac{\disX(\gamma,\zeta)}{\disW(\bfp,\bfq)}\right\}.\]
	For simplicity, we define \dt{lower bound}
	$\ltor(\mu) =
		\inf_{\bfp,\bfq\in\Ws}
		\frac{\disX(\gamma,\zeta)}   
		{\disW(\bfp,\bfq)}$
	and \dt{upper bound}
		$\utor(\mu) =\sup_{\bfp,\bfq\in\Ws}
			\frac{\disX(\gamma,\zeta) 
			}{\disW(\bfp,\bfq)}$.
	Then, $D_\mu=[\ltor(\mu),\utor(\mu)]$.
	
	A \dt{distortion bound} is a number $C_0\geq1$ such that
	$\tor(\mu)\ib[\frac{1}{C_0},C_0]$. If the distortion $\tor(\mu)$ has
	a bound $C_0$, then for all $\bfp,\bfq\in\Ws$, we have
	\[\frac{1}{C_0}\disW(\bfp,\bfq)
		<\disX(\gamma,\zeta)<C_0 \disW(\bfp,\bfq),\]
	for $\gamma=\mu(\bfp),\zeta=\mu(\bfq)\in\Xs$.
	
	We use $\len(\pi)$ to denote the length of a path $\pi$.
}	
\ignore{
\subsection{Distortion for $SO(3)$}
	Finally, consider the distortion for the
	representation $\olmu_3: \sqs[3]\to SO(3)$,
	where $\olmu_3=\mu_3\circ \Phi_3$ where
	$\rho: S^3\to SO(3)$ is usual map from unit
	quaternions to $SO(3)$.  Huynh observed that
		$\Phi_6(\rho(q_1),\rho(q_2))=2\Phi_3(q_1,q_2)$
	\cite{huynh:metrics:rotation:90}.
	Since $\Phi_3, \Phi_6$ are our ``natural'' distance functions
	on $S^3, SO(3)$, the distortion range of $\rho$ is $[2,2]$.
	By \refLem{composition},
	we conclude that $D_{\olmu_3}=2[1/4,1] = [\half,2]$.
}
\ignore{
	Finally, we consider the distortion for the
	representation $\olmu_3: \sqs[3]\to SO(3)$.
	which is derived from $\mu_3$ by the identifications
	$q=-q$ for $q\in\whso[3]$ as well as $q\in SO(3)$.

	In the following, we consider a general situation where
	the metric space $(X,d_X)$ has a
	involution map $-: X\to X$ (i.e., $-(-(x))=x$).
	We say $x$ and $-x$ are equivalent, denoted $x\sim -x$.
	Let $\olx=\set{x,-x}$ be the equivalence class of $x$.
	The quotient space $\olX=\set{\olx: x\in X}$ has the
	induced the function $\old_X:\olX^2\to\RR_{\ge 0}$
	where $\old_X(\olx,\oly)\as \min\set{d(x,y),d(x,-y)}$.
	We have the following easy result:

	\bleml[invol]
	If metric $d$ in invariant under the involution,
			i.e., $d(x,y)=d(-x,-y)$ for all $x,y\in X$,
	then the function $\old_X(\olx,\oly)$ is well-defined,
	and induces a metric space $(\olX,\old)$.
	\eleml
	\ignore{
	\bpf
		Suppose $x\sim x'$ and $y\sim y'$.
		Then we may verify that
			$$\min\set{d(x,y),d(x,-y)}=\min\set{d(x',y'),d(x',-y')}.$$
		This proves that $\old(\olx,\oly)$ is well-defined
		(it does not depend on the choice of representatives of
		$\olx,\oly$).
		To show that $\old$ is a metric, it is clear that
		$\old(\olx,\oly)\ge 0$ with equality iff $\olx=\oly$.
		Further, $\old(\olx,\oly)=\old(\oly,\olx)$.  To show
		the triangular inequality:
		\beqarrys
		\old(\olx,\oly)+\old(\oly,\olz)
			&=& d(\sigma x,y)+d(y,\sigma' z)
				& \text{(for some $\sigma,\sigma'\in\set{\pm 1}$)}\\
			&\ge& d(\sigma x,\sigma' z)
				& \text{(since $d$ is a metric)}\\
			&\ge& \old(\olx,\olz)
				& \text{(definition of $\old$)}.
		\eeqarrys
	\epf
	}
	Since $SO(3)$ is the equivalence classes
	of $S^3$ under the involution $q\mapsto -q$,
	and the metric on $S^3$ is invariant under this involution,
	we conclude that $SO(3)$ has a metric 
	induced by the metric on $S^3$.
}%
\ignore{
\subsection{Atlas Constant for an $SE(3)$ \cs}
	The isometry group for Euclidean space is
	$(\RR^n, g_{\RR^n}) = \RR^n \rtimes O(n)$ 
	where $H\rtimes G$ denote the semi-direct product
	of two groups.
	See \cite[Example 1.3.1, p.~9]{petersen:riemannian:bk}.

	Suppose that $\mu: \RR^3\times\partial[-1,1]^4\to \RR^3\times SO(3)$
	is the 
	representation map for the Delta robot. Let
	$\mu^r:\partial[-1,1]^4\to SO(3)$ be the 
	rotation representation.
	
	There is a double covering map from $S^3$ to $SO(3)$. The box
	subspace $\Ws^r=\wh{SO}(3)=\partial[-1,1]^4$ is also a double
	covering map from
	$\partial[-1,1]^4$ but whose metric is inherited from the double
	covering. So the metric on $\wh{SO}(3)$ is the same as the metric on
	$\partial[-1,1]^4$. Their relation can be given by the following
	commutative diagram, we will follow the symbols along the diagram
	throughout this section:

	\begin{center}
	\includegraphics[width=0.4\textwidth]{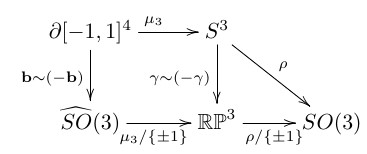}
	\end{center}
	  
	The representation $\mu^r$ is the composition of $\rho$ and $\mu_3$, 
	and we have computed $D_{\mu_3}=[\frac{1}{4},1]$. However, 
	the distortion bound for $\rho$ is indeterminate without an explicit
	metric 
	on $SO(3)$. The choice of an essential metric on $SO(3)$ is a problem.
	
	In Huynh's paper
	\cite{huynh:metrics-rotation:09},
	defines $6$ different metrics $(\Phi_i, i=1\dd 6)$ on $SO(3)$.
	The metric $\Phi_1$ is the Euclidean distance between Euler angles 
	$\|(\Delta\alpha,\Delta\beta,\Delta\gamma)\|_2$, 
	$\Phi_2$ is the norm of the difference of quaternions ($S^3$) 
	$\min\{|q_1-q_2|,|q_1+q_2|\}$, $\Phi_3$ is the angle between unit
	quatenions 
	$\arccos|q_1\cdot q_2|$, $\Phi_4$ is one minus the absolute value of
	inner product 
	of unit quaterions $1-|q_1\cdot q_2|$, $\Phi_5$ is the \Frob\ norm of
	identity matrix 
	minus the transition matrix $\|I-R_1R_2^T\|_F$, and $\Phi_6$ is the
	distance 
	by the exponential map $\|\log(R_1R_2^T)\|$. Note that $\Phi_3$ is
	the metric 
	induced by $\rho$, i.e., $\Phi_3(q_1,q_2)=g_{S^3}(q_1,q_2)$, and 
	we choose $\Phi_6$ as the \dt{essential metric} on $SO(3)$. 
	This $\Phi_6$ will be used to compute both atlas constant $C_0$ and 
	\Lip\ constant $L_0$.
	Huynh noted that $\Phi_6=2\Phi_3$, therefore $D_\rho=\{2\}$ and 
	the distortion for $\mu^r=\rho\circ\mu_3$ is $D_{\mu^r}=[\half,2]$. 

	The representation $\mu^t$ of the translational subspace is the
	identity map from 
	$\RR^3$ to itself. The distortion is obviously $1$. There are
	different ways to define 
	metrices on the combined space $\wh{SE}(3)$ and $SE(3)$. Usually,
	they are defined 
	by some $L^p$ combination for some $p\geq1$:
		\begin{align*}
		d_{\wh{SE}(3)}((\bfx,\gamma),(\bfy,\zeta))
		& = \left(d_{\RR^3}(\bfx,\bfy)^p 
			+ \lambda d_{\wh{SO}(3)}(\gamma,\zeta)\right)^{\frac{1}{p}} \\
				d_{SE(3)}((\bfx,\gamma),(\bfy,\zeta))
		& = \left(d_{\RR^3}(\bfx,\bfy)^p 
			+ \lambda d_{SO(3)}(\gamma,\zeta)\right)^{\frac{1}{p}}
		\end{align*}
	When computing the combined distortions, we could apply
	\begin{align*}
		D_{\mu} & \leq \sup \frac{d_{SE(3)}((\bfx,\gamma),(\bfy,\zeta))}
		{d_{\wh{SE}(3)}((\bfx,\gamma),(\bfy,\zeta))} \\
		& \leq \sup \left(\max\{\sup D_{\mu^t}^p,\sup D_{\mu^r}^p\}
				\right)^{\frac{1}{p}} \\
		& \leq \max\{\sup D_{\mu^t},\sup D_{\mu^r}\} \\
		& \leq 2
	\end{align*}
	and
	\begin{align*}
		D_{\mu} & \geq \inf \frac{d_{SE(3)}((\bfx,\gamma),(\bfy,\zeta))}
		{d_{\wh{SE}(3)}((\bfx,\gamma),(\bfy,\zeta))} \\
		& \geq \inf \left(\min\{\inf D_{\mu^t}^p,\inf
		D_{\mu^r}^p\}\right)^{\frac{1}{p}} \\
		& \geq \min\{\inf D_{\mu^t},\inf D_{\mu^r}\} \\
		& \geq \frac{1}{2}
	\end{align*}
	Therefore, the combined distortion is still $D_{\mu}=[\frac{1}{2},2]$.
}
%
\sect{Final Remarks}
	1. It remains to determine the distortion of the
	representation $\olmu_3: \sqs[3]\to SO(3)$,
	where $\olmu_3=\mu_3\circ \rho$ and
	$\rho: S^3\to SO(3)$ is the usual map from unit
\ifarxiv{%
	quaternions to $SO(3)$ \cite{bottema-roth:bk}.
	}{%
	quaternions to $SO(3)$.
	}
	Huynh observed that
		$\Phi_6(\rho(q_1),\rho(q_2))=2\Phi_3(q_1,q_2)$
	\cite{huynh:metrics-rotation:09}.
	Since $\Phi_3$ and $ \Phi_6$ are natural distance functions
	on $S^3$ and $SO(3)$, the distortion range of $\rho$ is $[2,2]$.
	By \refLem{composition},
	we conclude that the distance distortion
	of $\olmu_3$ is
		$$D_{\olmu_3}=2[\tfrac{1}{4},1] = [\half,2].$$

	2. This work establishes the exact distortion bounds
	on the cubic model representation $\olmu_3$ of $SO(3)$.
	A critical step was to exploit
	Riemannian geometry by reducing 
	distance distortion to metric distortion.
	We expect our cubic representation to have
	other applications, e.g., a rigorous subdivision search for an
	$\veps$-optimal rotation to best fit experimental data as in motion
	capture research.


\ignore{
	We have initiated a systematic study of
	the cubic model $\sqs[3]$ for $SO(3)$, with the
	goal of developing it into a practical 
	tool in subdivision algorithms for $SO(3)$
	or $SE(3)$.  We obtained the optimal 
	distortion constant of the cubic model; one can investigate
	other polyhedral models of $SO(3)$ or $S^2$.
	E.g., for for $\sqs[2]$, we can consider the
	truncated cube, 
	with 6 octagonal and 8 triangular faces 
	obtained by cutting off the vertices of maximum
	distortion in $\sqs[2]$.  The distortion constant is improved
	but subdivision of this model would be more challenging.

	Our framework of subdivision atlases
	for manifolds suggests many open questions for
	further exploration. 
	We also plan to implement the data-structure
	described in this paper.

}


%
%

%
\end{document}